\theoremstyle{remark}
\newtheorem*{remark}{Remark}
\theoremstyle{definition}
\newtheorem{definition}{Definition}
\newtheorem{proposition}{Proposition}
\let\emptyset\varnothing
\tikzset{
  treenode/.style = {shape=rectangle, rounded corners,
                     draw, align=center,
                     top color=white,
                     bottom color=blue!20},
  root/.style     = {treenode, font=\Large,
                     bottom color=red!30},
  env/.style      = {treenode, font=\ttfamily\normalsize},
  dummy/.style    = {circle,draw},
  level 1/.style={sibling distance = 16em},
  level 2/.style={sibling distance = 8em},
}
\crefname{appsec}{Appendix}{Appendices}
\newcommand\norm[1]{\left\lVert#1\right\rVert}
\newcommand{\todo}[1]{\textcolor{red}{\textbf{Fix:} \emph{#1}}}
\newcommand{\indep}{\perp \!\!\! \perp}
\newcommand*{\medcap}{\mathbin{\scalebox{1.3}{\ensuremath{\cap}}}}
\newcommand*{\medcup}{\mathbin{\scalebox{1.3}{\ensuremath{\cup}}}}
\DeclareMathOperator*{\argmax}{arg\,max}
\DeclareMathOperator{\diag}{diag}
\DeclareMathOperator{\Var}{Var}
\NewDocumentCommand{\change}{}{\textcolor{red}{\textbf{Changes: }}}
\begin{document}

% If your paper is accepted and the title of your paper is very long,
% the style will print as headings an error message. Use the following
% command to supply a shorter title of your paper so that it can be
% used as headings.
%
%\runningtitle{I use this title instead because the last one was very long}

% If your paper is accepted and the number of authors is large, the
% style will print as headings an error message. Use the following
% command to supply a shorter version of the authors names so that
% they can be used as headings (for example, use only the surnames)
%
%\runningauthor{Surname 1, Surname 2, Surname 3, ...., Surname n}

\runningtitle{Additive Tree-Structured Covariance Function}
\twocolumn[

\aistatstitle{Additive Tree-Structured Covariance Function for Conditional Parameter Spaces in Bayesian Optimization}
% \aistatstitle{Additive Tree-Structured Covariance Function for Conditio}

\aistatsauthor{ Xingchen Ma \And Matthew B.\ Blaschko}

\aistatsaddress{ ESAT-PSI, KU Leuven, Belgium \And ESAT-PSI, KU Leuven, Belgium} 
]

\begin{abstract}

Bayesian optimization (BO) is a sample-efficient global optimization algorithm for black-box functions which are expensive to evaluate. Existing literature on model based optimization in conditional parameter spaces are usually built on trees. In this work, we generalize the additive assumption to tree-structured functions and propose an additive tree-structured covariance function, showing improved sample-efficiency, wider applicability and greater flexibility. Furthermore, by incorporating the structure information of parameter spaces and the additive assumption in the BO loop, we develop a parallel algorithm to optimize the acquisition function and this optimization can be performed in a low dimensional space. We demonstrate our method on an optimization benchmark function, as well as on a neural network model compression problem, and experimental results show our approach significantly outperforms the current state of the art for conditional parameter optimization including SMAC, TPE and Jenatton et al. (2017).

\end{abstract}

\section{INTRODUCTION}
\label{sec:introduction}

In many applications, we are faced with the problem of optimizing an expensive black-box function and we wish to find its optimum using as few evaluations as possible. \textit{Bayesian Optimization} (BO) \citep{jones1998} is a global optimization technique, which is specially suited for these problems. BO has gained increasing attention in recent years \citep{srinivas2010,brochu2010,hutter2011,shahriari2016,frazier2018} and has been successfully applied to sensor location \citep{srinivas2010}, hierarchical reinforcement learning \citep{brochu2010}, and automatic machine learning \citep{klein2017}.

In the general BO setting, we aim to solve the following problem:
\begin{equation*}
    \min_{\bm{x} \in \mathcal{X} \subset \mathbb{R}^d} f(\bm{x}),
\end{equation*}
where $\mathcal{X}$ is the parameter space and $f$ is a black-box function which is expensive to evaluate.
%We also have no access to the gradient and convexity of $f$. 
Typically, the parameter space $\mathcal{X}$ is treated as structureless, however, for many practical applications, there exists a conditional structure in $\mathcal{X}$: 
\begin{equation}
\label{eq:condition-dep}
    f(\bm{x} \mid \bm{x}_{\mathcal{I}_A}) = f(\bm{x}_{\mathcal{I}_B} \mid \bm{x}_{\mathcal{I}_A}),
\end{equation}
where the index sets $\mathcal{I}_A=\{ a_1,\dots,a_k \}$, $\mathcal{I}_B=\{ b_1,\dots,b_m \}$ and $\mathcal{I}_A \cup \mathcal{I}_B$ are subsets of $\mathcal{I}_D=\{ 1,\dots, d \}$.
Intuitively, \Cref{eq:condition-dep} means given the value of $\bm{x}_{\mathcal{I}_A}$, the value of $f(\bm{x})$ remains unchanged after removing $\bm{x}_{\mathcal{I}_D \setminus (\mathcal{I}_A \cup \mathcal{I}_B)}$. 
Here we use set based subscripts to denote the restriction of $\bm{x}$ to the corresponding indices. %\todo{it's not really $\bm{x}$ that is a set, but the set of indices.}
%abuse the notation 

This paper investigates optimization problems where the parameter space exhibits such a conditional structure. In particular, we focus on a specific instantiation of the general conditional structure in \Cref{eq:condition-dep}: Tree-structured parameter spaces, which are also studied in \citet{jenatton2017}. Many problems fall into this category, for example, when fitting \textit{Gaussian Processes} (GPs), we need to choose from several covariance functions and subsequently set their continuous hyper-parameters. 
% make decisions about the discrete choices and the setting of hyper-parameters of the covariance functions. 
Different covariance functions may share some hyper-parameters, such as the signal variance and the noise variance \citep{rasmussen2006}.

By exploring the properties of this tree structure, we design an \textit{additive tree-structured} (Add-Tree) covariance function, which enables information sharing between different data points under the \textit{additive assumption}, and allows GP to model $f$ in a sample-efficient way. 
Furthermore, by including the tree structure and the additive assumption
%explicitly incorporating our proposed kernel and this tree structure 
in the BO loop, we develop a parallel algorithm to optimize the acquisition function, making the overall execution faster. Our proposed method also helps to alleviate the curse of dimensionality through two advantages: (i) we avoid modeling the response surface directly in a high-dimensional space, and (ii) the acquisition optimization is also operated in a lower-dimensional space.

In the next section, we will briefly review BO together with the literature related to optimization in a conditional parameter space. 
%We will also point out a fundamental issue in BO \todo{we should say here what the issue is} and this problem often arises from ignoring the structure in the parameter space. 
In Section~\ref{sec:problem_formulation}, we formalize the family of objective functions that can be solved using our approach. We then present our Add-Tree covariance function in Section~\ref{sec:add-tree-kernel}. In Section~\ref{sec:bo}, we give the inference procedure and BO algorithm using our covariance function. We then report a range of experiments in Section~\ref{sec:experiment}. Finally, we conclude in Section~\ref{sec:conclusion}.

\subsection{RELATED WORK}
\label{sec:related_work}

\subsubsection{Bayesian Optimization}
\label{subsec:bo}

BO has two major components. The first one is a probabilistic regression model used to fit the response surface of $f$. Popular choices include GPs \citep{brochu2010}, random forests \citep{hutter2011} and adaptive Parzen estimators \citep{bergstra2011}. We refer the reader to \citet{rasmussen2006} for the foundations of Gaussian Processes. The second one is an acquisition function $u_{t-1}$ which is constructed from this regression model and is used to propose the next evaluation point. Popular acquisition functions include the \textit{expected improvement} (EI) \citep{jones1998}, \textit{knowledge gradient} (KG) \citep{frazier2009}, \textit{entropy search} (ES) \citep{hennig2012} and \textit{Gaussian process upper confidence bound} (GP-UCB) \citep{srinivas2010}. 

One issue that often occurs in BO is, in high-dimensional parameter spaces, its performance may be no better than random search \citep{wang2013,li2016}. This deterioration is due to high uncertainty in fitting a regression model due to the curse of dimensionality \citep[Ch.~2]{gyorfi2002}, which in turn leads to pure-explorational behavior of BO. 
\iffalse
\todo{begin cut} To emphasize the importance of explicitly using the conditional structure in \Cref{eq:condition-dep}, we briefly restate the results of \citet[Ch.~2]{gyorfi2002} here. Let $L_{\infty}(d,n)$ be the expected Chebyshev distance of $X$ to its nearest point in $X_1,\dots,X_n$, where $X,X_1,\dots,X_n$ are i.i.d.\ $\mathbb{R}^d$ random variables uniformly distributed in $[0,1]^d$, then $\textstyle L_{\infty} \geq \frac{d}{2(d+1)n^{1/d}}$~\citep[Ch.~2]{gyorfi2002}.  \change For example, when $d=12,n=100$, $L_{\infty}$ is larger than 0.31, which implies a novel data point will be far from existing input features and the prediction of a smooth regression model at this data point will also be far from existing targets. \todo{end cut} 
\fi
This will further cause inefficiency in the acquisition function, making the proposal of the next data point behave like random selection. 
% based on the regression model and the observed training data, we know little information at a new data point 
%\todo{missing logical connection: (i) in inequality samples are uniformly distributed, not data, (ii) no regression model occurs in the bound}. 
Standard GP-based BO ignores the structure in a parameter space, and fits a regression model in $\mathbb{R}^d$. By leveraging this structure information, we can work in a low-dimensional space $\mathbb{R}^m$ (recall \Cref{eq:condition-dep}) instead of $\mathbb{R}^d$.

%%%%%%%%%%%%%%%%%%%%%%%%%%%%%%%%%%%%%%%%%%%%%%%%%%

%%%% actually, i think the following part is OK, because it can be used to state why gpyopt fails in this setting, because it ignores the conditional structure, to optimize the fucntion, gpyopt has to expand this tree into a flatten vector, which is usually much larger than effective dimension, thus it is expected to fail or not work well

\subsubsection{Conditional Parameter Spaces}
\label{subsec:cps}

Sequential Model-based Algorithm Conﬁguration (SMAC)~\citep{hutter2011} and Tree-structured Parzen Estimator Approach (TPE)~\citep{bergstra2011} are two popular non-GP based optimization algorithms that are aware of the conditional structure in $\mathcal{X}$, however, they lack favorable properties of GPs: uncertainty estimation in SMAC is non-trivial and the dependencies between dimensions are ignored in TPE. Additionally, neither of these methods have a particular sharing mechanism, which is valuable in the low-data regime. 

In the category of GP-based BO, which is our focus in this paper, \citet{hutter2013} proposed a covariance function that can explicitly employ the tree structure and share information at those categorical nodes. However, their specification for the parameter space is too restrictive and they require the shared node to be a categorical variable.  By contrast, we allow shared variables to be continuous (see Section~\ref{sec:add-tree-kernel}). \citet{swersky2014a} applied the idea of \citet{hutter2013} in a BO setting, but their method still inherits the limitations of \citet{hutter2013}.  Another covariance function to handle tree-structured dependencies is presented in \citet{levesque2017}. In that case, they force the similarity of two samples from different condition branches to be zero and the resulting model can be transformed into several independent GPs.  We perform a comparison to an independent GP baseline in Section~\ref{sec:synthetic-function}. In contrast to Add-Tree, the above approaches either have very limited applications, or lack a sharing mechanism. %\todo{maybe a sentence here saying why our independent GP baseline is appropriate to compare as a substitute for these other covariance functions.}
\citet{jenatton2017} presented another GP-based BO approach, where they handle  tree-structured dependencies by introducing a weight vector linking all sub-GPs, and this introduces an explicit sharing mechanism. Although \citet{jenatton2017} overcame the above limitations, the enforced linear relationships between different paths make their semi-parametric approach less flexible compared with our method. We observe in our experiments that this can lead to a substantial difference in performance.

\section{PROBLEM FORMULATION}
\label{sec:problem_formulation}

We begin by summarizing notation used in this paper. Let $\mathcal{T}=(V,E)$ be a tree, in which $V$ is the set of vertices, $E$ is the set of edges, $P=\{p_i\}_{1\leq i \leq |P|}$ be the set of leaves and $r$ be the root of $\mathcal{T}$ respectively, $\{ l_i \}_{1\leq i \leq |P|}$ be the ordered set of vertices on the path from $r$ to the $i$-th leaf $p_{i}$, and $h_i$  be the number of vertices along $l_i$ (including $r$ and $p_i$). To distinguish an objective function defined on a tree-structured parameter space from a general objective function, we use $f_{\mathcal{T}}$ to indicate our objective function. In what follows, we will call $f_{\mathcal{T}}$ a~\textit{tree-structured function}.

To formalize the family of problems that can be solved with our method, we start with some definitions.
%of the tree-structured parameter space (Definition~\ref{def:TreeStructuredParameterSpace}) and the tree-structured function (Definition~\ref{def:TreeStructuredFunction}).

\begin{definition}[Tree-structured parameter space]
\label{def:TreeStructuredParameterSpace}
A \emph{tree-structured parameter space} $\mathcal{X}$ is associated with a tree $\mathcal{T}=(V,E)$. For any $v \in V$, $v$ is associated with a bounded continuous variable of $\mathcal{X}$; the set of outgoing edges $E_v$ of $v$ represent one categorical variable of $\mathcal{X}$ and each element of $E_v$ represents a specific setting of the corresponding categorical variable.
\end{definition}

\begin{definition}[Tree-structured function]
\label{def:TreeStructuredFunction}
A \emph{tree-structured function} $f_{\mathcal{T}}: \mathcal{X} \rightarrow \mathbb{R}$ is defined on a $d$-dimensional tree-structured parameter space $\mathcal{X}$. The $i$-th leaf $p_i$ is associated with a function $f_{p_i,\mathcal{T}}$ of the variables associated with the vertices along $l_i$. $f_{\mathcal{T}}$ is called \emph{tree-structured} if for every leaf of the tree-structured parameter space
\begin{equation}
\label{eq:def-tree-structured-function}
f_{\mathcal{T}}(\bm{x}) := f_{p_j,\mathcal{T}}(\bm{x}\vert_{l_j}) ,
\end{equation}
where $p_j$ is selected by the categorical values of $\bm{x}$ and $\bm{x}\vert_{l_j}$ is the restriction of $\bm{x}$ to $l_j$.
\end{definition}

% \begin{definition}
% \label{def:TreeStructuredFunction}
% A tree-structured function $f_{\mathcal{T}}: \mathcal{X} \rightarrow \mathbb{R}$ is defined on a $d$-dimensional parameter space $\mathcal{X}$ comprising $c$ categorical and $d-c$ continuous variables, where the continuous variables are constrained to lie in a bounded subset of $\mathbb{R}^{d-c}$. $f_{\mathcal{T}}$ is associated with a tree structure $\mathcal{T}=(V,E)$, where elements of $V$ represent continuous dimensions of $\mathcal{X}$ and elements of $E$ represent settings of categorical variables. The $i$-th leaf $p_i$ is associated to a function $f_{p_i,\mathcal{T}}$ of the variables along $l_i$. The value of $f_{\mathcal{T}}$ at $\bm{x}$ is equal to $f_{p_i, \mathcal{T}}$ at the corresponding continuous values along $l_i$, where $l_i$ is determined by the categorical dimensions of $\bm{x}$. \todo{this is pretty long winded and confusing - maybe we can replace this with a (set of) formula(s)}
% \end{definition}

To aid in the understanding of a tree-structured function (and subsequently our proposed Add-Tree covariance function), we depict a simple tree-structured function in Figure~\ref{fig:example}. 
The outgoing edges of $r$ represent the categorical variable $t\in \{1,2\}$ and the settings of $t$ are shown around these two edges. Vertices $r,p_1,p_2$ are associated with bounded variables $\bm{v}_r \in [-1,1]^2, \bm{v}_{p_1} \in [-1,1]^2, \bm{v}_{p_2} \in [-1,1]^3$ respectively and leaves $p_1,p_2$ are associated with two functions shown in Figure~\ref{fig:example}. In Definition~\ref{def:TreeStructuredFunction}, the restriction of one input to a path means we collect variables associated with the vertices along that path and concatenate them using a fixed ordering.  
For example, in Figure~\ref{fig:example}, let $\bm{x} \in \mathcal{X}$ be an 8-dimensional input, then the restriction of $\bm{x}$ to path $l_1$ is a 4-dimensional vector. The function illustrated in Figure~\ref{fig:example} can be compactly written down as:
% , and the restriction of $\bm{x}$ to vertex $r$ is a 2-dimensional vector. 
% \begin{align} \label{eq:FigureExampleTreeFunction}
%     f_{\mathcal{T}}(\bm{x}) &= \mathds{1}_{t=1} (\norm{\bm{v}_r}^2 + \norm{\bm{v}_{p_1}}^2)  \\
%     & \quad \quad \quad \quad + \mathds{1}_{t=2} (\norm{\bm{v}_r}^2 + \norm{\bm{v}_{p_2}}^2), \nonumber
% \end{align}
\begin{align} \label{eq:FigureExampleTreeFunction}
    f_{\mathcal{T}}(\bm{x}) = \mathds{1}_{t=1} f_{p_1,\mathcal{T}}(\bm{x}\vert_{l_1}) + \mathds{1}_{t=2} f_{p_2,\mathcal{T}}(\bm{x}\vert_{l_2}) , 
\end{align}
% ()
% ()
where $\bm{x}$ is the concatenation of $(\bm{v}_r,\bm{v}_{p_1},\bm{v}_{p_2},t)$, $\mathds{1}$ denotes the indicator function, $f_{p_1,\mathcal{T}}(\bm{x}\vert_{l_1})=\norm{\bm{v}_r}^2 + \norm{\bm{v}_{p_1}}^2$ and $f_{p_2,\mathcal{T}}(\bm{x}\vert_{l_2}) = \norm{\bm{v}_r}^2 + \norm{\bm{v}_{p_2}}^2$.

\begin{figure}[t]
% \vspace{.3in}
\centering
\begin{tikzpicture}[
    grow                    = right,
    edge from parent/.style = {draw, -latex},
    root/.style     = {treenode, font=\large, bottom color=red!30},
    every node/.style       = {font=\footnotesize},
    %sibling distance        = 3em,
    level distance          = 10em,
    level 1/.style={sibling distance = 8em},
    sloped
    ]
  \node [root] {$r, \bm{v}_r \in [-1,1]^2$}
    child { node [env] {$p_1, \bm{v}_{p_1} \in [-1,1]^2, f_{p_1, \mathcal{T}} = \norm{\bm{v}_r}^2 + \norm{\bm{v}_{p_1}}^2 $} edge from parent node[above] {t=1}  }
    child { node [env] {$p_2, \bm{v}_{p_2} \in [-1,1]^3, f_{p_2, \mathcal{T}} = \norm{\bm{v}_r}^2 + \norm{\bm{v}_{p_2}}^2 $} edge from parent node[below] {t=2}  };
\end{tikzpicture}
% \vspace{.3in}
\caption{A Simple Tree-Structured Function} 
\label{fig:example}
\end{figure}
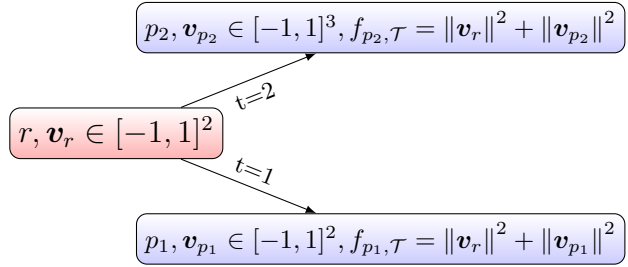

% \begin{align*}
%     f_{\mathcal{T}}(\bm{x}) &= \mathds{1}_{c=1} (\norm{\bm{v}_r}^2 + \norm{\bm{v}_{p_1}}^2) + \\
%     & \quad \quad \quad \quad \mathds{1}_{c=2} (\norm{\bm{v}_r}^2 + \norm{\bm{v}_{p_2}}^2) \\
%     \text{or equivalently, } & \\
%     &= \left\{
%         \begin{array}{ll}
%             \norm{\bm{v}_r}^2 + \norm{\bm{v}_{p_1}}^2 & \quad c = 1 \\
%             \norm{\bm{v}_r}^2 + \norm{\bm{v}_{p_2}}^2 & \quad c = 2
%         \end{array}
%     \right.
% \end{align*}{}

A tree-structured function $f_{\mathcal{T}}$ is actually composed of several smaller functions $\{ f_{p_i,\mathcal{T}} \}_{1 \leq i \leq |P|}$, given a specific setting of the categorical variables, $f_{\mathcal{T}}$ will return the associated function at the $i$-th leaf. To facilitate our description in the following text, we define the \textit{effective dimension} in Definition~\ref{def:EffectiveDimension}.

\begin{definition}[Effective dimension]
\label{def:EffectiveDimension}
The \emph{effective dimension} of a tree-structured function $f_{\mathcal{T}}$ at the $i$-th leaf $p_i$ is the sum of dimensions of the variables associated with the vertices along $l_i$. 
\end{definition}

\begin{remark}
The effective dimension of $f_{\mathcal{T}}$ can be much smaller than the dimension of $\mathcal{X}$. 
\end{remark}

For the tree-structured function depicted in Figure~\ref{fig:example}, the dimension of $\mathcal{X}$ is $d=8$ and the effective dimension at $p_1$ and $p_2$ are 4 and 5 respectively. Particularly, if $\mathcal{T}$ is a perfect binary tree, in which each vertex is associated with a 1-dimensional continuous variable, the effective dimension of $f_{\mathcal{T}}$ at every leaf is the depth $h$ of $\mathcal{T}$, while the dimension of $\mathcal{X}$ is $3 \cdot 2^{h-1}-2$.
% $d=2^{h+1}-3$ if we represent categorical variables using one-hot encoding. 
% As stated in Section~\ref{subsec:bo}, 
If the tree structure information is thrown away, we have to work in a much higher dimensional parameter space. 
It is known that in high dimensions, BO behaves like random search, which violates the entire purpose of model based optimization. 

% The motivation is information sharing
% low-data regime -> Make the most of data -> information sharing -> what information x_{i} can provide to x_{j} -> additive assumption
% instead of in .
Now we have associated the parameter space $\mathcal{X}$ with a tree structure, which enables us to work in the low-dimensional effective space. How can we leverage this tree structure to optimize $f_{\mathcal{T}}$? Recalling $f_{\mathcal{T}}$ is a collection of $|P|$ functions $\{ f_{p_i,\mathcal{T}} \}_{1 \leq i \leq |P|}$, a trivial solution is using $|P|$ independent GPs to model each $f_{p_i, \mathcal{T}}$ separately. 
% to independently apply BO to each $f_{p_i, \mathcal{T}}$, and in this case, we have $|P|$ GPs. 
In BO settings, we are almost always in low data regime because black-box calls to $f_{\mathcal{T}}$ are expensive (e.g.\ the cost of training and evaluating a machine learning model). Modeling $f_{\mathcal{T}}$ using a collection of GPs is obviously not an optimal way because we discard the correlation between $f_{p_i,\mathcal{T}}$ and $f_{p_j,\mathcal{T}}$ when $i \neq j$. How to make the most of the observed data, especially how to share information between data points coming from different leaves remains a crucial question. In this paper, we assume additive structure within each $f_{p_i,\mathcal{T}}$ for $i=1,\cdots,|P|$. More formally, $f_{p_i,\mathcal{T}}$ can be decomposed in the following form:
\begin{equation}
\label{eq:additive-assumption}
    f_{p_i,\mathcal{T}}(\bm{x}) = \sum_{j=1}^{h_i} f_{ij}(v_{ij})
\end{equation}
where $v_{ij}$ is the associated variable on the $j$-th vertex along $l_i$. 
Additive assumption has been extensively studied in GP literature \citep{duvenaud2011,kandasamy2015,gardner2017,rolland2018} and is a popular way for dimension reduction\citep{gyorfi2002}.
% While this additive assumption seems strong, it is actually expressive and has been 
% and in the generalized additive models (GAMs) literature \citep{hastie1986}. 
We note the tree-structured function discussed in this paper is a generalization of the objective function presented in these publications and our additive assumption in \Cref{eq:additive-assumption} is also a generalization of the additive structure considered previously. For example, the additive function discussed in \citet{kandasamy2015} can be viewed as a tree-structured function the associated tree of which has a branching factor of 1, i.e.\ $|P|=1$.
Our generalized additive assumption will enable an efficient sharing mechanism as we develop in Section~\ref{sec:add-tree-kernel}.

\section{THE ADD-TREE COVARIANCE FUNCTION}
\label{sec:add-tree-kernel}

In this section, we describe how we use the tree structure and the additive assumption to design a covariance function, which is sample-efficient in low-data regime. We start with the definition of the Add-Tree covariance function (Definition~\ref{def:Add-Tree}), then we show the intuition (\Cref{eq:joint-distributation}) behind this definition and present an algorithm (Algorithm~\ref{algo:kernel-construction}) to automatically construct an Add-Tree covariance function from the specification of a tree-structured parameter space, finally a proof of the validity of this covariance function is given. 

% To formalize the observed structure for the covariance in Equation~\ref{eq:joint-distributation}, we define our proposed covariance function Add-Tree in Definition~\ref{def:Add-Tree}. Based on this definition, we present Algorithm~\ref{algo:kernel-construction} to automatically construct the Add-Tree covariance function for a given tree-structured objective.

\begin{definition}[Add-Tree covariance function]
\label{def:Add-Tree}
For a tree-structured function $f_{\mathcal{T}}$, let $\bm{x}_{i'}$ and $\bm{x}_{j'}$  be two inputs of $f_{\mathcal{T}}$, $p_i$ and $p_j$ be the corresponding leaves, $a_{ij}$ be the lowest common ancestor (LCA) of $p_i$ and $p_j$, $l_{ij}$ be the path from $r$ to $a_{ij}$. A covariance function $k_{f_{\mathcal{T}}}: \mathcal{X} \times \mathcal{X} \rightarrow \mathbb R $ is said to be an \emph{Add-Tree covariance function} if for each $\bm{x}_{i'}$ and $\bm{x}_{j'}$
\begin{align}
\label{eq:add-tree-kernel-1}
    k_{\mathcal{T}}(\bm{x}_{i'},\bm{x}_{j'}) := & k_{l_{ij}}(\bm{x}_{i'}\vert_{l_{ij}},\bm{x}_{j'}\vert_{l_{ij}})  \nonumber \\
    = & \sum_{v \in l_{ij}} k_v(\bm{x}_{i'}\vert_{v},\bm{x}_{j'}\vert_{v})
\end{align}
where $\bm{x}_{i'}\vert_{l_{ij}}$ is the restriction of $\bm{x}_{i'}$ to the variables along the path $l_{ij}$, and 
%\begin{align}
%k_{l_{ij}}(\bm{x}_{i'}\vert_{l_{ij}},\bm{x}_{j'}\vert_{l_{ij}}) = \sum_{v \in l_{ij}} k_v(\bm{x}_{i'}\vert_{v},\bm{x}_{j'}\vert_{v}) , \label{eq:add-tree-kernel-2}
%\end{align}
$k_v$ is any positive semi-definite covariance function on the continuous variables appearing at a vertex $v$ on the path $l_{ij}$. We note the notation $l_{ij}$ introduced here is different from the notation $l_i$ introduced in the beginning of \Cref{sec:problem_formulation}.
%\todo{consider rephrasing definition to be of the form: a covariance function is said to be an Add-Tree covariance function if there exists a $k_v$ such that Equation~\eqref{eq:add-tree-kernel-1} holds}
\end{definition}

% \todo{move definition here - don't keep the reader "in suspense" - just tell them up front and then give the example to help them understand.}

To give the ideas behind the Add-Tree family of covariance functions, we take the tree-structured function illustrated in Figure~\ref{fig:example} (\Cref{eq:FigureExampleTreeFunction}) as an example.\footnote{To simplify the presentation, we use a two-level tree structure in this example. The covariance function, however, generalizes to tree-structured functions of arbitrary depth (Algorithm~\ref{algo:kernel-construction}).} 
Let $X_1 \in \mathbb{R}^{n_1 \times d_1} $ and $X_2 \in \mathbb{R}^{n_2 \times d_2}$ be the inputs from $l_1$ and $l_2$, where $d_1=2+2$ and $d_2=2+3$ are the effective dimensions of $f_{\mathcal{T}}$ at $p_1$ and $p_2$ respectively. Denote the latent variables associated to the decomposed functions\footnote{On functions and latent variables, one can refer to~\citet[chap.~2]{rasmussen2006}} at $r$, $p_1$ and $p_2$ by $\bm{f}_r \in \mathbb{R}^{n_1+n_2}$, $\bm{f}_1 \in \mathbb{R}^{n_1}$ and $\bm{f}_2 \in \mathbb{R}^{n_2}$, respectively. Reordering and partition $\bm{f}_r$ into two parts corresponding to $p_1$ and $p_2$, so that 
\begin{equation*}
    \bm{f}_{r} = 
    \begin{bmatrix}
           \bm{f}_{r}^{(1)} \\
           \bm{f}_{r}^{(2)}
    \end{bmatrix}, 
    \bm{f}_{r}^{(1)} \in \mathbb{R}^{n_1}, 
    \bm{f}_{r}^{(2)} \in \mathbb{R}^{n_2}.
\end{equation*}
Let the gram matrix corresponding to $\bm{f}_r, \bm{f}_1, \bm{f}_2$ be $K_r \in \mathbb{R}^{(n_1+n_2) \times (n_1+n_2)}, K_1 \in \mathbb{R}^{n_1 \times n_1}, K_2 \in \mathbb{R}^{n_2 \times n_2}$. W.l.o.g, let the means of $\bm{f}_r,\bm{f}_1,\bm{f}_2$ be $\bm{0}$. By the additive assumption in \Cref{eq:additive-assumption}, the latent variables corresponding to the associated functions at $p_1$ and $p_2$ are $\bm{f}_{r}^{(1)} + \bm{f}_1$ and $\bm{f}_{r}^{(2)} + \bm{f}_2$, we have:
\begin{equation}
\label{eq:step1}
    \begin{bmatrix}
           \bm{f}_{r}^{(1)} + \bm{f}_1 \\
           \bm{f}_{r}^{(2)} + \bm{f}_2 
         \end{bmatrix} = \begin{bmatrix}
           \bm{f}_{r}^{(1)} \\
           \bm{f}_{r}^{(2)}
         \end{bmatrix} + \begin{bmatrix}
           \bm{f}_1 \\
           \bm{f}_2 
         \end{bmatrix}, 
             \begin{bmatrix}
           \bm{f}_{r}^{(1)} \\
           \bm{f}_{r}^{(2)}
         \end{bmatrix} \sim \mathcal{N}(\bm{0}, K_r) .
\end{equation}
Due to $\bm{f}_1 \indep \bm{f}_2$, where $\indep$ denotes $\bm{f}_1$ is independent of $\bm{f}_2$, we have:
\begin{equation}
\label{eq:step2}
    \begin{bmatrix}
           \bm{f}_1 \\
           \bm{f}_2 \\
         \end{bmatrix} \sim \mathcal{N} \left( \bm{0}, 
         \begin{bmatrix}
           K_1  & \bm{0}  \\
            \bm{0}    & K_2 
         \end{bmatrix} \right), 
\end{equation}
furthermore, because of the additive assumption in \Cref{eq:additive-assumption},
\begin{equation}
\label{eq:step3}
    \begin{bmatrix}
           \bm{f}_{r}^{(1)} \\
           \bm{f}_{r}^{(2)}
         \end{bmatrix} \indep 
         \begin{bmatrix}
           \bm{f}_1 \\
           \bm{f}_2 
         \end{bmatrix} .
\end{equation}
% Combine Equations~\ref{eq:step1},~\ref{eq:step2} and~\ref{eq:step3}, we arrive at our key conclusion:
Combine \Cref{eq:step1,eq:step2,eq:step3}, we arrive at our key conclusion:
\begin{equation}
        \begin{bmatrix}
           \bm{f}_r^{(1)} {+} \bm{f}_1 \\
           \bm{f}_r^{(2)} {+} \bm{f}_2 \\
         \end{bmatrix} \sim \mathcal{N}\left(\bm{0}, 
         \left[
         \begingroup % keep the change local
         \begin{array}{ll}
           K_r^{(11)} + K_1 & K_r^{(12)} \\
           K_r^{(21)}       & K_r^{(22)} + K_2
         \end{array}
         \endgroup
         \right]
         \right),
\label{eq:joint-distributation}         
\end{equation}
where $K_r$ is decomposed as follows:
\begin{equation*}
    K_r = \begin{bmatrix} 
    K_r^{(11)} & K_r^{(12)} \\ 
    K_r^{(21)} & K_r^{(22)} 
    \end{bmatrix}
\end{equation*}
in which $K_r^{(11)} \in \mathbb{R}^{n_1 \times n_1}, K_r^{(12)} \in \mathbb{R}^{n_1 \times n_2}, K_r^{(21)} \in \mathbb{R}^{n_2 \times n_1}, K_r^{(22)} \in \mathbb{R}^{n_2 \times n_2}$. The observation in \Cref{eq:joint-distributation} is crucial in two aspects: firstly, we can use a single covariance function and a global GP to model our objective, secondly and more importantly, this covariance function allows an efficient sharing mechanism between data points coming from different paths, although we cannot observe the decomposed function values at the shared vertex $r$, we can directly read out this sharing information from $K_r^{(12)}$.

\begin{algorithm}
\DontPrintSemicolon

\SetKwFunction{Kernel}{Kernel}
\SetKwFunction{Index}{Index}
\SetKwFunction{Dim}{Dim}
\SetKwFunction{Range}{Range}
\SetKwInOut{Input}{Input}
\SetKwInOut{Output}{Output}

\Input{The associated tree $\mathcal{T}=(V,E)$ of $f_\mathcal{T}$}
\Output{Add-Tree covariance function $k_{\mathcal{T}}$}

% \KwData{A tree-structured function $f_{\mathcal{T}}$, vertices $V$ of $\mathcal{T}$ indexed by their order in a breadth-first search (BFS)}

$k_{\mathcal{T}} \gets 0$ \;
\For{$v \gets V$}{
  $vi \gets \Index{v} $ \;
  $k_v^{d} \gets k_{v}^{\delta}(\bm{x}, \bm{x}') $ \tcc*{$k_{v}^{\delta}$ is 1 iff $\bm{x}$ and $\bm{x}'$ both have vertex $v$ in their paths}
  $si \gets vi + 1$ \tcc*{start index of v}
  $ei \gets vi + 1 + \Dim{v}$ \tcc*{end index of v}
  $k_v^{c} \gets k_c(\bm{x}_{si \leq i \leq ei}, \bm{x}'_{si \leq i \leq ei})$ \tcc*{$k_c$ is any p.s.d.\ covariance function}
  $k_v \gets k_v^{d} \times k_{v}^{c}$ \;
  $k_{\mathcal{T}} \gets k_{\mathcal{T}} + k_v$ 
}
\caption{Add-Tree Covariance Function}
\label{algo:kernel-construction}
\end{algorithm}

We summarize the  construction of an  Add-Tree covariance function in Algorithm~\ref{algo:kernel-construction}, where the value of \texttt{Index} comes from applying BFS to the associated tree structure $\mathcal{T}$ of $f_{\mathcal{T}}$ and \texttt{Dim} at $v$ is the dimension of the variable associated with vertex $v$. We provide implementation details in \cref{app:implementation-details}. In \Cref{app:additive-assumption-issues}, we discuss the case when additive assumption is not enough to model the objective function. 

\begin{proposition}
The Add-Tree covariance function defined by Definition~\ref{def:Add-Tree} is positive semi-definite for all tree-structured functions defined in Definition~\ref{def:TreeStructuredFunction} with the additive assumption satisfied. 
\end{proposition}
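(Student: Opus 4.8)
The plan is to show that $k_{\mathcal{T}}$ as defined in Definition~\ref{def:Add-Tree} is a valid (positive semi-definite) covariance function by exhibiting it as a sum of individually positive semi-definite kernels, so that the closure of the class of p.s.d.\ kernels under addition finishes the argument. The essential observation is that Algorithm~\ref{algo:kernel-construction} expresses $k_{\mathcal{T}}$ as $\sum_{v \in V} k_v$, where each $k_v = k_v^{\delta} \times k_v^{c}$. So the whole statement reduces to showing that each summand $k_v$ is p.s.d., and that this per-vertex construction actually agrees with the path-based formula in \Cref{eq:add-tree-kernel-1}.

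\emph{First} I would verify the equivalence between Definition~\ref{def:Add-Tree} and Algorithm~\ref{algo:kernel-construction}. The definition sums $k_v$ over $v \in l_{ij}$, the path from the root to the lowest common ancestor $a_{ij}$ of the two inputs' leaves. The algorithm instead sums over \emph{all} $v \in V$ but multiplies in the indicator factor $k_v^{\delta}(\bm{x}_{i'}, \bm{x}_{j'})$, which is $1$ exactly when $v$ lies on both inputs' root-to-leaf paths and $0$ otherwise. Since $v$ lies on both paths if and only if $v$ is an ancestor of both leaves, i.e.\ $v \in l_{ij}$, the two formulations coincide. This step is bookkeeping but it is what licenses using the algorithmic form.

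\emph{Next} I would establish positive semi-definiteness of each $k_v = k_v^{\delta} \cdot k_v^{c}$. Two facts suffice. The factor $k_v^{c}$ is p.s.d.\ by hypothesis ($k_c$ is any p.s.d.\ covariance on the continuous variable at $v$). The indicator factor $k_v^{\delta}(\bm{x}_{i'}, \bm{x}_{j'}) = \mathds{1}[v \in l_i]\,\mathds{1}[v \in l_j]$ is p.s.d.\ because it has the form $\phi(\bm{x}_{i'})\phi(\bm{x}_{j'})$ with the scalar feature map $\phi(\bm{x}) = \mathds{1}[v \in \text{path}(\bm{x})]$; any kernel of the outer-product form $\phi \phi^{\top}$ is automatically p.s.d. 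Then the Schur product theorem gives that the elementwise (Hadamard) product $k_v^{\delta} \cdot k_v^{c}$ of two p.s.d.\ kernels is again p.s.d. Finally, a finite sum of p.s.d.\ kernels is p.s.d., so $k_{\mathcal{T}} = \sum_v k_v$ is p.s.d., which is exactly the claim.

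\emph{The main obstacle} is not analytic difficulty but the careful handling of the indicator factor and the restriction operator $\bm{x}\vert_v$. One must be precise that $k_v^{c}$ is evaluated on the restriction of each input to the coordinates associated with $v$, and that when $v$ is off one of the paths the indicator zeroes out the term so the (possibly undefined or irrelevant) continuous values never contribute. I would state the feature-map representation of $k_v^{\delta}$ explicitly to make the Schur-product step airtight, since that is where a sceptical reader would look for a gap; the rest follows from the standard closure properties (products via Schur, sums directly) of the cone of p.s.d.\ kernels.
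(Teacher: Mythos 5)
Your proof is correct and takes essentially the same route as the paper's: both reinterpret the path-restricted sum in \Cref{eq:add-tree-kernel-1} as a sum over all vertices of products $k_v^{\delta} \times k_v^{c}$ defined on the whole observation space, and conclude via closure of positive semi-definiteness under products and sums. If anything, your version is slightly more explicit, since you justify the positive semi-definiteness of $k_v^{\delta}$ via the feature map $\phi(\bm{x}) = \mathds{1}[v \in \text{path}(\bm{x})]$ and invoke the Schur product theorem for the product step, both of which the paper asserts without detail.
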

\begin{proof}
We will consider each term in \Cref{eq:add-tree-kernel-1} and demonstrate that it results in a positive semi-definite covariance function over the whole set of data points, not just the data points that follow the given path.  In particular, consider the p.s.d.\ covariance function $k_{v}^\delta(\bm{x}_{i'},\bm{x}_{j'}) = \begin{cases} 1 & \text{ if } v\in l_i \wedge v \in l_j \\
0 & \text{ otherwise}
\end{cases}$, for some vertex $v$.  We see that the product $k_v^c \times k_{v}^\delta$ defines a p.s.d.\ covariance function over the entire space of observations (since the product of two p.s.d.\ covariance functions is itself p.s.d.), and not just those sharing vertex $v$. In this way, we may interpret \Cref{eq:add-tree-kernel-1} as a summation over only the non-zero terms of a set of covariance functions defined over all vertices in the tree. As the resulting covariance function sums over p.s.d.\ covariance functions, and positive semi-definiteness is closed over positively weighted sums, the result is p.s.d.
\end{proof}

\section{BO FOR TREE-STRUCTURED FUNCTIONS}
\label{sec:bo}

In this section, we first describe how to perform the inference with our proposed Add-Tree covariance function, then we present a parallel algorithm for the optimization of the acquisition function.

\subsection{Inference with Add-Tree}

Given noisy observations $\mathcal{D}_n  = \{(\bm{x_i}, y_i)\}_{i=1}^{n}$, we would like to obtain the predictive distribution for the latent variable $f_{*\mathcal{T}}$ at a new input $\bm{x}_{*}$.
% \todo{not sure what equality means here, maybe $\approx$?}
% $\bm{f}_{*\mathcal{T}} = \bm{f}_{\mathcal{T}}(\bm{x}_{*})$ at a new input $\bm{x}_{*}$. 
We begin with some notation. Let $p_{*}$ be the leaf selected by the categorical values of $\bm{x}_{*}$, $l_{*}$ be the path from the root $r$ to $p_{*}$. All $n$ inputs are collected in the design matrix $X \in \mathbb{R}^{n \times d}$, where the $i$-th row represents $\bm{x}_i \in \mathbb{R}^d$, and the targets and observation noise are aggregated in vectors $\bm{y} \in \mathbb{R}^n$ and $\bm{\sigma} \in \mathbb{R}^n$ respectively. 
Let $\Sigma=\diag(\bm{\sigma})$ be the noise matrix, where $\diag(\bm{\sigma})$ denotes a diagonal matrix containing the elements of vector $\bm{\sigma}$, $S=\{i \mid l_{*} \medcap l_{i} \neq \emptyset \}$, $I \in \mathbb{R}^n$ be the identity matrix, $M \in \mathbb{R}^{|S| \times n}$ be a selection matrix, which is constructed by removing the $j$-th row of $I$ if $j \notin S$, $X' = M X \in \mathbb{R}^{|S| \times d}$, $\bm{y'} = M \bm{y} \in \mathbb{R}^{|S|}$, $\Sigma^{'} = M \Sigma M^T \in \mathbb{R}^{|S| \times |S|}$. We can then write down the joint distribution of $f_{*\mathcal{T}}$ and $\bm{y'}$ as:
\begin{equation*}
    \begin{bmatrix}
           f_{*\mathcal{T}} \\
           \bm{y'}
    \end{bmatrix} \sim \mathcal{N}\left(\bm{0},
        \begin{bmatrix}
        k_{\mathcal{T}}(\bm{x}_{*}, \bm{x}_{*}) & k_{\mathcal{T}}(\bm{x}_{*}, X') \\
        k_{\mathcal{T}}( X', \bm{x}_{*}) & k_{\mathcal{T}}( X', X') + \Sigma'
        \end{bmatrix}
    \right) .
\end{equation*}
We note that this joint distribution has the same standard form~\citep{rasmussen2006} as in all GP-based BO, but that it is made more efficient by the selection of $X'$ based on the tree structure. 

The predictive distribution for $f_{*\mathcal{T}}$ is: 
\begin{equation}
    f_{*\mathcal{T}} \mid X',\bm{y'},\bm{x}_{*} \sim \mathcal{N}(\bar{f}_{*\mathcal{T}}, \Var(f_{*\mathcal{T}}))
    % f_{*\mathcal{T}} \mid X',\bm{y'},\bm{x}_{*} \sim \mathcal{N}(\bar{f}_{*\mathcal{T}}, K_{*\mathcal{T}})
\label{eq:predictive-distribution}
\end{equation}
where
\begin{eqnarray*}
\bar{f}_{*\mathcal{T}} & = &  k_{\mathcal{T}}(\bm{x}_{*}, X') \left[ k_{\mathcal{T}}( X', X') + \Sigma' \right]^{-1} \bm{y'}, \\
\Var(f_{*\mathcal{T}}) & = &  k_{\mathcal{T}}(\bm{x}_{*}, \bm{x}_{*}) \\ 
&& - k_{\mathcal{T}}(\bm{x}_{*}, X') [k_{\mathcal{T}}( X', X') + \Sigma']^{-1} k_{\mathcal{T}}( X', \bm{x}_{*}).
\end{eqnarray*}
% \begin{eqnarray*}
% \bar{f}_{*\mathcal{T}} & = &  k_{\mathcal{T}}(\bm{x}_{*}, X') \left[ k_{\mathcal{T}}( X', X') + \Sigma' \right]^{-1} \bm{y'}, \\
% K_{*\mathcal{T}} & = &  k_{\mathcal{T}}(\bm{x}_{*}, \bm{x}_{*}) \\ 
% && - k_{\mathcal{T}}(\bm{x}_{*}, X_{*}) [k_{\mathcal{T}}( X_{*}, X_{*}) + \Sigma_{*}]^{-1} k_{\mathcal{T}}( X_{*}, \bm{x}_{*}).
% \end{eqnarray*}

Black-box calls to the objective function usually dominate the running time of BO, and the time complexity of fitting GP is of less importance. In \cref{app:time-complexity-inference}, we provide details on time complexity for our Add-Tree along with other related methods for completeness.

\iffalse
Table~\ref{tbl:time-complexity} shows the time complexity of using our Add-Tree covariance function to model a GP and other related methods. In Table~\ref{tbl:time-complexity}, $n_i$ is number of observations coming from path $l_i$, and $n = \sum_{1 \leq i \leq |P|} n_i^{3}$, $S_i = \{j \mid 1 \leq j \leq n, \bm{x}_{j}\vert_{l_i} \neq \emptyset\}$. Because usually a tree-structured function has shared variables across different paths, $S_i$ will be larger than $n_i$. 
From Table~\ref{tbl:time-complexity}, while it looks like our method has higher cost than independent GPs, Add-Tree actually needs fewer data points to achieve a similar performance due to the efficient sharing mechanism and this will be empirically demonstrated in Section~\ref{sec:experiment}. In BO settings, this means our approach requires fewer black-box calls to the expensive objective, which typically dominates the computational cost of the GP model.

\begin{table}[h]
\caption{Time Complexity Comparison}
% \vspace{0.2cm}
\label{tbl:time-complexity}
\centering
\begin{tabular}{lll}
\toprule
\Gape[4pt]{\textbf{Method}} & \textbf{Share?} & \textbf{Complexity} \\
\midrule
Independent & no & $\mathcal{O}(\sum_{1 \leq i \leq |P|} n_i^{3})$ \\ 
\citeauthor{jenatton2017} & yes & $\mathcal{O}(\sum_{1 \leq i \leq |P|} n_i^{3} + |V|^3)$ \\
Add-Tree    & yes & $\mathcal{O}(\sum_{1 \leq i \leq |P|} |S_i|^{3})$\\
\bottomrule
\end{tabular}

\end{table}

\fi

\subsection{Acquisition Function Optimization}                 

In BO, the acquisition function $u_{t-1}(\bm{x}| \mathcal{D})$, where $t$ is the current step of optimization, is used to guide the search for the optimum of our objective function. By trading off the exploitation and exploration, it is expected we can find the optimum using as few calls as possible to the objective. To get the next point at which we evaluate our objective function, we solve $\bm{x}_{t} = \argmax_{\bm{x} \in \mathcal{X}} u_{t-1}(\bm{x}| \mathcal{D})$. For noisy observations, GP-UCB~\citep{srinivas2010} has nice theoretical proprieties and explicit regret bounds for many commonly used covariance functions, and in this paper, we will use GP-UCB, which is defined as:
\begin{equation*}
    u_{t-1}(\bm{x}| \mathcal{D}) =  \mu_{t-1}(\bm{x}) + \beta_t^{1/2} \sigma_{t-1}(\bm{x}),
\end{equation*}
where $\beta_t$ are suitable constants, $\mu_{t-1}(\bm{x})$ and $\sigma_{t-1}(\bm{x})$ are the predictive posterior mean and variance at $\bm{x}$ from \Cref{eq:predictive-distribution}. 
Throughout the experiments in this paper, following~\citet{kandasamy2015}, we set $\beta_t = 0.2\tilde{d} \log (2t)$, in which $\tilde{d}$ denotes the dimension of the space where we optimize GP-UCB and is usually smaller than $d$ for a tree-structured function. We note the Add-Tree covariance function developed here can be combined with any other acquisition function. \cref{app:combine-with-other-acq} contains more details on combining other acquisition functions with Add-Tree.

A na\"{i}ve way to obtain the next evaluation point for a tree-structured function is to independently find $|P|$ optima, each one corresponding to the optimum of the associated function at a leaf, and then choose the best candidate across these optima. This approach is presented in~\citet{jenatton2017} and the authors there already pointed out this is too costly. Here we develop a much more efficient algorithm, which is dubbed as Add-Tree-GP-UCB, to find the next point and we summarise it in Algorithm~\ref{algo:bo}. By explicitly utilizing the associated tree structure $\mathcal{T}$ of $\bm{f}_{\mathcal{T}}$ and the additive assumption, the first two nested \texttt{for} loops can be performed in parallel.  Furthermore, as a by-product, each acquisition function optimization routine is now performed in a low dimensional space whose dimension is even smaller than the effective dimension. Time complexity analysis of \Cref{algo:bo} is given in \cref{app:time-complexity-algorithm2}.
% \todo{this seems a nice point that should be emphasized in abstract/intro/conclusion}

\begin{algorithm}
\DontPrintSemicolon

\SetKwFunction{Kernel}{Kernel}
\SetKwFunction{Index}{Index}
\SetKwFunction{Dim}{Dim}
\SetKwFunction{Range}{Range}
\SetKwFunction{Pop}{Pop}
\SetKwFunction{Concat}{Concat}

\SetKwInOut{Input}{Input}
\SetKwInOut{Output}{Output}
\Input{The associated tree $\mathcal{T}=(V,E)$ of $f_\mathcal{T}$, Add-Tree covariance function $k_{\mathcal{T}}$ from Algorithm \ref{algo:kernel-construction}, paths $\{ l_i \}_{1\leq i \leq |P|}$ of $\mathcal{T}$}
% \Output{}

$D_0 \gets \emptyset$\;
\For{$t \gets 1,\dots$}{
  \For{$v \gets V$}{
    $\bm{x}_{t}^{v} \gets \argmax_x  \mu_{t-1}^{v}(\bm{x}) + \sqrt{\beta_t} \sigma_{t-1}^{v}(\bm{x})$ \;
    $u_{t}^{v} \gets \mu_{t-1}^{v}(\bm{x}_{t}^{v}) + \sqrt{\beta_t} \sigma_{t-1}^{v}(\bm{x}_{t}^{v})$ \;
  }
  
  \For{$i \gets 1,\dots,|\mathcal{P}|$}{
    $U_t^{l_i} \gets \sum_{v \in l_i} u_{t}^{v} $ \tcc*{additive assumption from \Cref{eq:additive-assumption}}
  }
  $j \gets \argmax_{i} \{ U_t^{l_i} \mid i=1,\dots,|\mathcal{P}|\}  $ \;
  $\bm{x}_t \gets \medcup_{v \in l_j} \{\bm{x}_v\}$ \;
%   $\bm{x}_t \gets \frown_{v \in l_j} (\bm{x}_v)$ \;
  $y_{t} \gets \bm{f}_{\mathcal{T}}(\bm{x}_t)$ \;
  $\mathcal{D}_t \gets \mathcal{D}_{t-1} \medcup \{(\bm{x}_t,y_t) \}$ \;
  Fitting GP using $\mathcal{D}_t$ to get $\{ (\mu_t^{v},\sigma_t^{v}) \}_{v \in V}$ using maximum likelihood \;
}

\caption{Add-Tree-GP-UCB}
\label{algo:bo}
\end{algorithm}

\section{EXPERIMENTS}
\label{sec:experiment}

In this section, we present results for two sets of experiments. 
To demonstrate the efficiency of our Add-Tree-GP-UCB, we first optimize the synthetic functions presented in~\citet{jenatton2017}, comparing to SMAC~\citep{hutter2011}, TPE~\citep{bergstra2011}, random search~\citep{bergstra2012}, standard GP-based BO from GPyOpt~\citep{gpyopt2016}, and the semi-parametric approach proposed in~\citet{jenatton2017}. To facilitate our following description, we refer to the above competing algorithms as \textbf{smac}, \textbf{tpe}, \textbf{random}, \textbf{gpyopt}, and \textbf{tree} respectively. We refer to our approach as \textbf{add-tree}. To verify our Add-Tree covariance function indeed enables sharing between different paths, we compare Add-Tree with independent GPs in the regression setting showing greater sample efficiency for our method. We then apply our method to the application of model compression for a three-layer fully connected neural network, outperforming competing methods.
%for BO in conditional parameter spaces. 

For all GP-based BO, including \textbf{gpyopt}, \textbf{tree} and \textbf{add-tree}, we use the squared exponential (SE) covariance function: $k_{\text{SE}}(r) = \sigma \exp (- r^2 / 2 l^2)$. To optimize the parameters of Add-Tree, we maximize the marginal log-likelihood function of the corresponding GP. 
As for the numerical routine used in fitting the GPs and optimizing the acquisition functions, we use multi-started L-BFGS-B, as suggested by~\citet{kim2019}.
% popular choices include DIRECT, CMA-ES and L-BFGS-B.
% ~\citet{kim2019} reported multi-started local optimizers are less costly than DIRECT and CMA-ES and perform comparably with these global optimizers, hence in this paper, we use multi-started L-BFGS-B. 
For all results in this section, we display the mean and twice the standard deviation of 10 independent runs.

We note the original code for~\citet{jenatton2017} is unavailable,\footnote{A request for the code was denied due to IP restrictions.} thus we have implemented their framework from scratch to obtain the results presented here. There are several hyper-parameters in their algorithm which are not specified in the publication.  To compare fairly with their method, we tune these hyper-parameters such that our implementation has a similar performance on the synthetic functions to that reported by \citet{jenatton2017}, and subsequently fix the hyper-parameter settings in the model compression task.

\subsection{Synthetic Experiments}
\label{sec:synthetic-function}

In our first experiment, we optimize the synthetic tree-structured function depicted in \Cref{fig:syn-function} and originally presented in~\citet{jenatton2017}. Non-shared variables including $x_4,x_5,x_6,x_7$ are defined in $[-1,1]$, all shared variables including $r_8,r_9$ are bounded in $[0,1]$, and all categorical variables including $x_1,x_2,x_3$ are binary. The dimension of $\mathcal{X}$ is $d=9$ and the effective dimension at any leaf is $2$. 

\begin{figure}[h]
% \vspace{0.3in}
\centering
\resizebox{0.95\columnwidth}{!}{
\begin{tikzpicture}[
    sibling distance        = 12em,
    level distance          = 5em,
    edge from parent/.style = {draw, -latex},
    every node/.style       = {font=\footnotesize},
    % level 1/.style={sibling distance = 15em},
    % level 2/.style={sibling distance = 7em},
    sloped,
    ]
  \node [root] {$x_1$}
    child {
      node [root] {$x_2,r_8$}
        child {
          node [env] {$x_4^2 + 0.1 + r_8$}
          edge from parent
            node[above] {0}
        }
        child {
          node [env] {$x_5^2 + 0.2 + r_8$}
          edge from parent
            node [above] {1}
        }
      edge from parent
        node [above] {0}
    }
    child {
      node [root] {$x_3,r_9$}
        child {
          node [env] {$x_6^2 + 0.3 + r_9$}
          edge from parent
            node[above] {0}
        }
        child {
          node [env] {$x_7^2 + 0.4 + r_9$}
          edge from parent
            node [above] {1}
        }
      edge from parent
        node [above] {1}
    };
\end{tikzpicture}
}
% \vspace{.3in}
\caption{Synthetic Function From~\citeauthor{jenatton2017}}
\label{fig:syn-function}
\end{figure}

\Cref{fig:optimization-comparison} shows the optimization results for the different competing methods. The x-axis shows the iteration number and the y-axis shows the $log_{10}$ distance between the best minimum achieved so far and the known minimum value, which in this case is $0.1$. It is clear from \Cref{fig:optimization-comparison} that our method has a substantial improvement in performance compared with other algorithms. After 60 iterations, the $\log_{10}$ distance of~\textbf{tree} is still higher than -4, while our method can quickly obtain a much better performance in less than 20 iterations. Interestingly, our method performs substantially better than independent GPs, which will be shown later, while in~\citet{jenatton2017}, their algorithm is inferior to independent GPs. We note \textbf{gpyopt}\footnote{GPyOpt  \citep{gpyopt2016} is a state-of-the-art open source Bayesian optimization software package with support for categorical variables.} performs worst, and this is expected (recall \Cref{subsec:bo}). \textbf{gpyopt} encodes categorical variables using a one-hot representation, thus it actually works in a space whose dimension is $d'=d+c=12$, which is relatively high considering we have less than 100 data points. In this case, \textbf{gpyopt} behaves like random search, but in a 12-dimensional space instead of the 9-dimensional space of a na\"{i}ve random exploration.

\begin{figure}[h]
% \vspace{.3in}
\centering
\includegraphics[width=0.95\columnwidth]{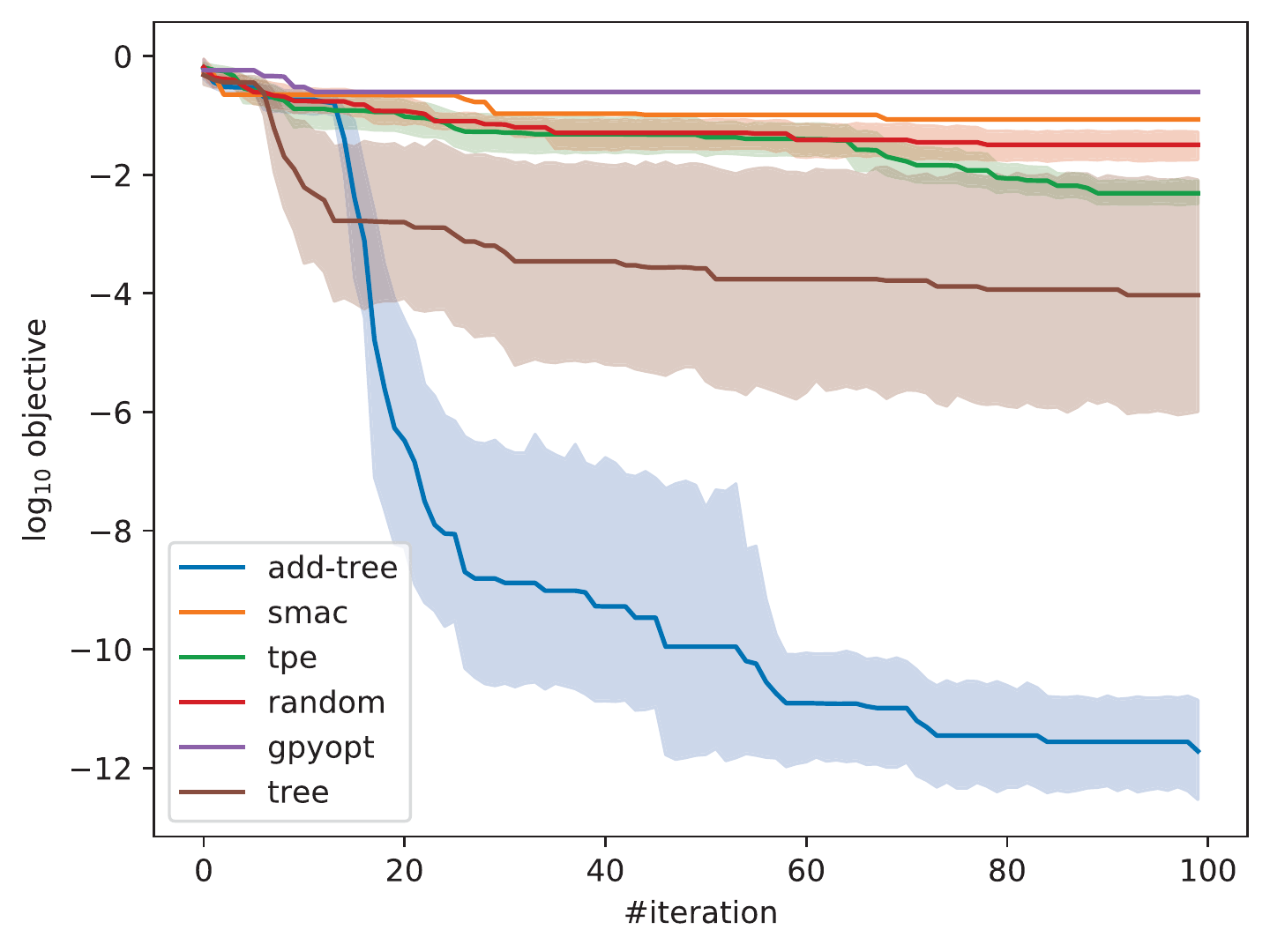}
% \vspace{.3in}
\caption{Optimization Performance Comparison Of The Synthetic Function}
\label{fig:optimization-comparison}
\end{figure}

To show that Add-Tree allows efficient information sharing across different paths, we compare it with independent GPs and \textbf{tree} in a regression setting and results are shown in \Cref{fig:regression-comparison}. 
The training data is generated from the synthetic function in \Cref{fig:syn-function}: categorical values are generated from a Bernoulli distribution with $p=0.5$, continuous values are uniformly generated from their domains. The x-axis shows the number of generated training samples and the y-axis shows the $\log_{10}$ of Mean Squared Error (MSE) on a separate randomly generated data set with 50 test samples. 
From \Cref{fig:regression-comparison}, we see that Add-Tree models the response surface better even though independent GPs have more parameters. For example, to obtain a test performance of $10^{-4}$, Add-Tree needs only 24 observations, while independent GPs require 44 data points. If we just look at the case when we have 20 training samples, the absolute MSE of independent GPs is $10^{-1}$, while for Add-Tree, it is $10^{-3}$. The reason for such a huge difference is when training data are rare, some paths will have few data points, and Add-Tree can use the shared information from other paths to improve the regression model. This property of Add-Tree is valuable in BO settings. 

\begin{figure}[h]
% \vspace{.3in}
\centering
\includegraphics[width=0.95\columnwidth]{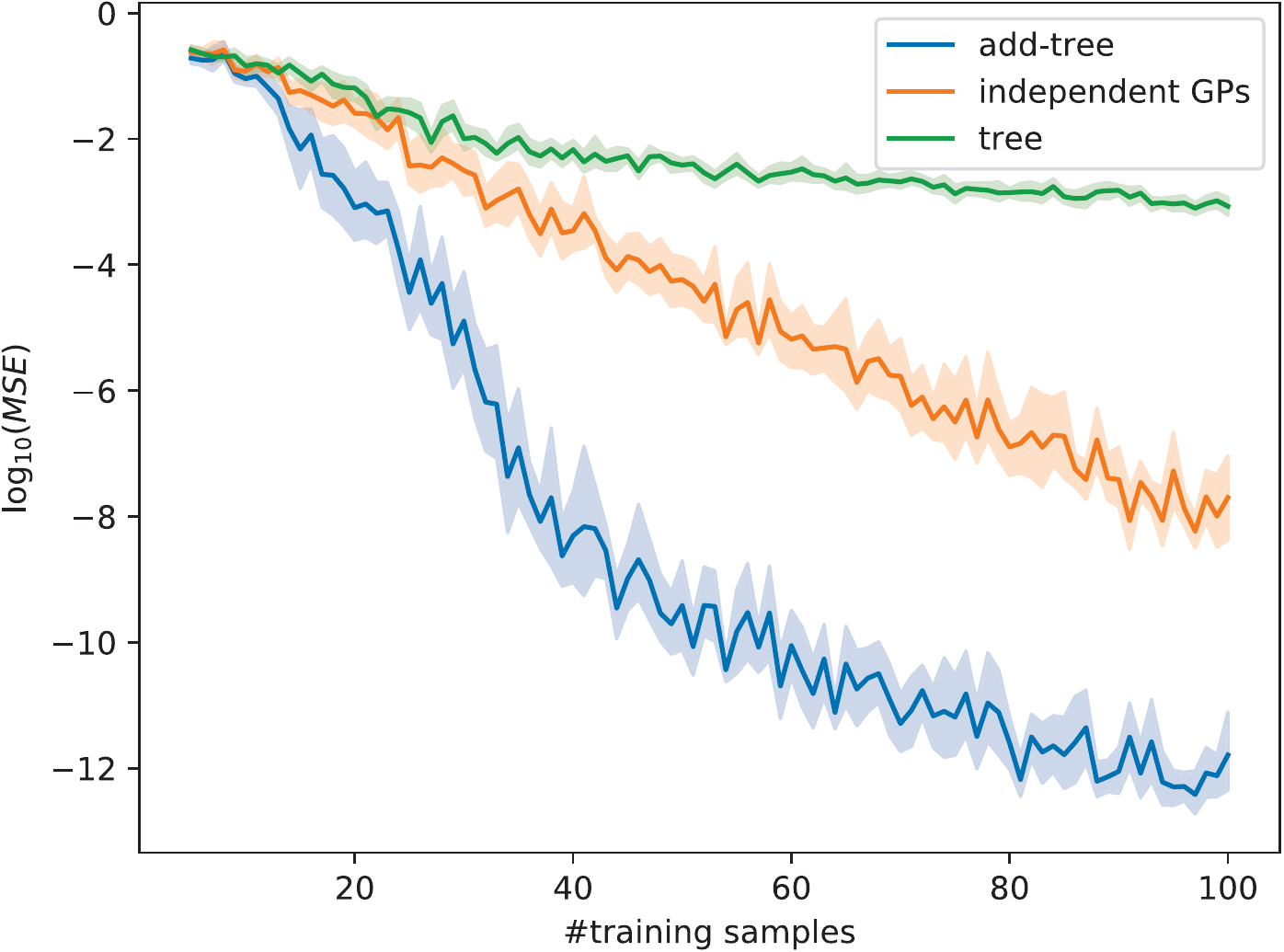}
% \vspace{.3in}
\caption{Regression Performance Comparison Of The Synthetic Function}
\label{fig:regression-comparison}
\end{figure}

\subsection{Model Compression}
\label{sec:exp:model-compression}

Neural network compression is essential when it comes to deploying models on devices with limited memory and low computational resources. For parametric compression methods, like Singular Value Decomposition (SVD) and weight pruning (WP), it is necessary to tune their parameters in order to obtain the desired trade-off between model size and performance.  Existing publications on model compression usually determine parameters for a single compression method, and do not have an automated selection mechanism over different methods. By encoding this problem using a tree-structured function, different compression methods can now be applied to different layers and this formulation is more flexible than the current literature.

In this experiment, we apply our method to compress a three-layer fully connected network \textit{FC3} originally defined in \citet{Ma2019a}. FC3 has 784 input nodes, 10 output nodes, 1000 nodes for each hidden layer and is pre-trained on the MNIST dataset. For each layer, we find a compression method between SVD and WP, then optimize either the rank of the SVD or the pruning threshold of WP. We only compress the first two layers, because the last layer occupies 0.56\% of total weights. The rank parameters are constrained to be in $[10, 500]$ and the pruning threshold parameters are bounded in $[0,1]$. Following \citet{Ma2019a}, the objective function used in compressing FC3 is: 
\begin{equation}
\label{eq:obj-compression}
    \gamma \mathcal{L}(\tilde{f_\theta}, f^*) + R(\tilde{f_\theta}, f^*) ,
\end{equation}
where $f^*$ is the original FC3, $\tilde{f_\theta}$ is the compressed model using parameter $\theta$, $R(\tilde{f_\theta}, f^*)$ is the compression ratio and is defined to be the number of weights in the compressed network divided by the number of weights in the original network.  $\mathcal{L}(\tilde{f}_\theta, f^*) := \mathbb{E}_{x \sim P}(\|\tilde{f}_\theta(x) - f^*(x)\|_2^2)$, where $P$ is an empirical estimate of the data distribution. Intuitively, the $R$ term in \Cref{eq:obj-compression} prefers a smaller compressed network, the $\mathcal{L}$ term prefers a more accurate compressed network and $\gamma$ is used to trade off these two terms. In this experiment, $\gamma$ is fixed to be 0.01, and the number of samples used to estimate $\mathcal{L}$ is 50 following \citet{Ma2019a}. 
\Cref{fig:compress-fc3} shows the results of our method (Add-Tree) compared with other methods. 
For this experiment, although \textbf{smac}, \textbf{tpe} and \textbf{tree} all choose SVD for both layers at the end, our method converges significantly faster, once again demonstrating our method is more sample-efficient than other competing methods. 
We note \textbf{gpyopt} also has the worst performance among all other competing methods in this experiment.

\begin{figure}[h]
% \vspace{0.3in}
\centering
\includegraphics[width=0.95\columnwidth]{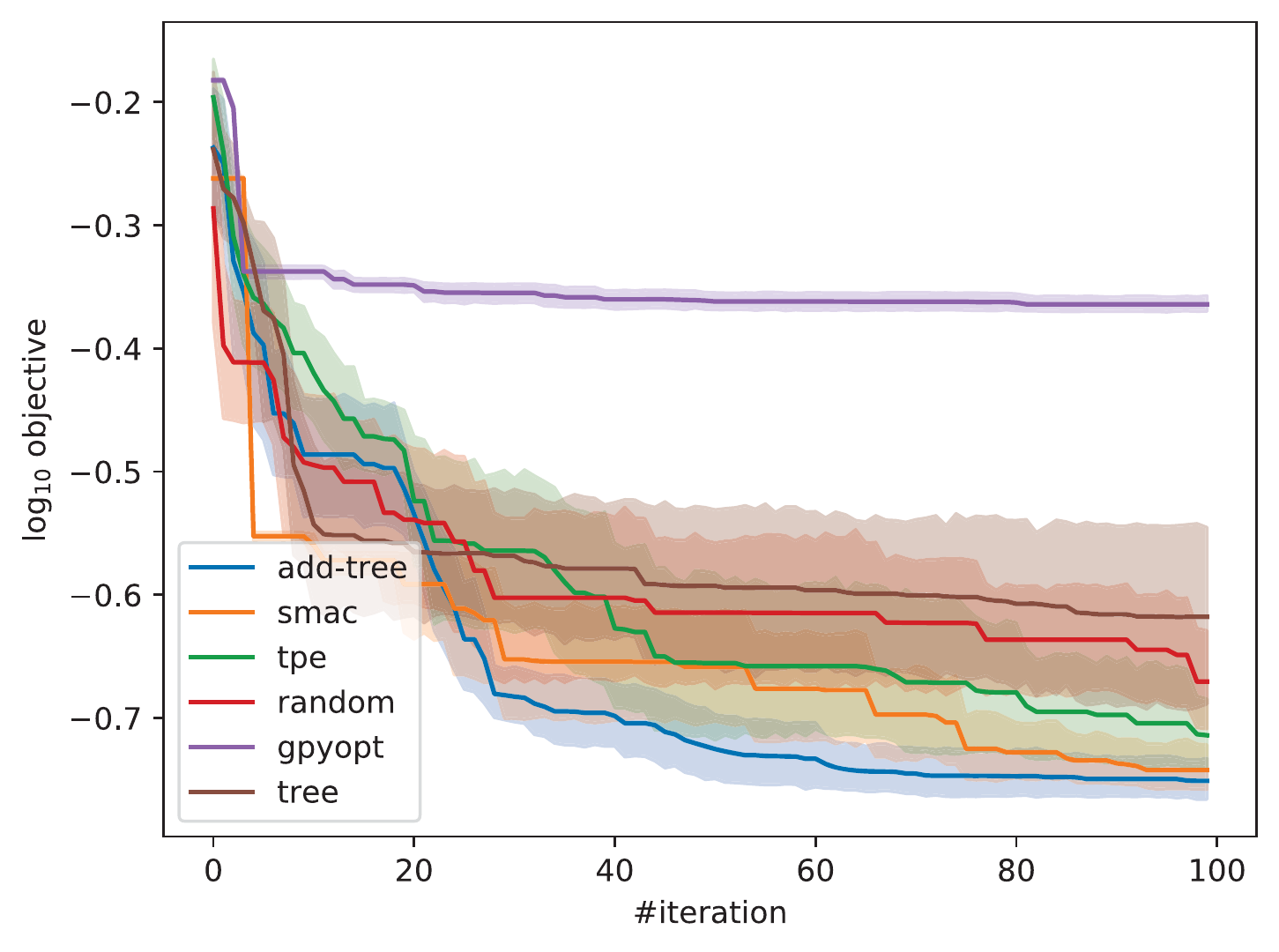}
% \vspace{0.3in}
\caption{Optimization Performance Comparison Of FC3 Compression}
\label{fig:compress-fc3} 
\end{figure}

Table~\ref{tab:wilcoxon-test} shows the results of pairwise Wilcoxon signed-rank tests\footnote{We used the Wilcoxon signed rank implementation from scipy.stats.wilcoxon with option (alternative == 'greater').} for the above two objective functions at different iterations.
%Bold numbers indicate our approach performs significantly better (with a significance level $\alpha=0.05$) than the method name in the corresponding column shown in the header row. 
In Table~\ref{tab:wilcoxon-test}, almost always performs significantly better than other competing methods (significance level $\alpha=0.05$), while no method is significantly better than ours.

\begin{table}[t]
\caption{\label{tab:wilcoxon-test}Wilcoxon Signed-Rank Test}
% \vspace{0.2cm}
\centering
\resizebox{\columnwidth}{!}{
\tabcolsep=0.15cm
% \small
% \setlength\tabcolsep{0pt}
\begin{tabular}{ccccccc}
\toprule
\Gape[4pt]{\textbf{Experiment}} & \textbf{Iter} &
\textbf{smac} & \textbf{tpe} & \textbf{random} & \textbf{gpyopt} & \textbf{tree} \\
% \makecell{\textbf{smac}\\(\citeauthor{hutter2011})} & 
% \makecell{\textbf{tpe}\\(\citeauthor{bergstra2011})} & 
% \makecell{\textbf{random}\\(\citeauthor{bergstra2012})} & 
% \makecell{\textbf{gpyopt}\\(GPyOpt)} & 
% \makecell{\textbf{tree}\\(\citeauthor{jenatton2017})}\\
\midrule
 & 40 & 0.003 & 0.030 & 0.005 & 0.003 & 0.018\\
\cmidrule{2-7}
 & 60 & 0.003 & 0.003 & 0.003 & 0.003 & 0.003\\
\cmidrule{2-7}
\multirow{-3}{*}{\centering \makecell{synthetic \\ function}} & 80 & 0.003 & 0.003 & 0.003 & 0.003 & 0.003\\
\cmidrule{1-7}
 & 40 & 0.101 & 0.023 & 0.101 & 0.003 & 0.014\\
\cmidrule{2-7}
 & 60 & 0.037 & 0.018 & 0.011 & 0.003 & 0.008\\
\cmidrule{2-7}
\multirow{-3}{*}{\centering \makecell{model \\ compression}} & 80 & 0.166 & 0.005 & 0.003 & 0.003 & 0.006\\
\bottomrule
\end{tabular}
}
\end{table}

\section{CONCLUSION}
\label{sec:conclusion}

In this work, we have designed a covariance function that can explicitly utilize the problem structure, and demonstrated its efficiency on a range of problems. In the low data regime, our proposed Add-Tree covariance function enables a powerful information sharing mechanism, which in turn makes BO more sample-efficient compared with other model based optimization methods. 
Contrary to other GP-based BO methods, we do not impose restrictions on the structure of a conditional parameter space, greatly increasing the applicability of our method. 
We also directly model the dependencies between different observations under the framework of Gaussian Processes, instead of placing parametric relationships between different paths, making our method more flexible. In addition, we incorporate this structure information and develop a parallel algorithm to optimize the acquisition function. 
For both components of BO, our proposed method allows us to work in a lower dimensional space compared with the dimension of the original parameter space.

Empirical results on an optimization benchmark function and on a neural network compression problem show our method significantly outperforms other competing model based optimization algorithms in conditional parameter spaces, including SMAC, TPE and~\citet{jenatton2017}.

\iffalse
For more involved problems, other conditional structures may be more appropriate, for example, directed acyclic graphs (DAGs). In DAGs, the dependencies between parameters can be more complicated and definitions and algorithms presented in this paper will need to be adapted for these structures. \todo{consider cutting this paragraph - very interesting, but the subject of a future paper - instead emphasize empirical results: we beat \cite{jenatton2017} and a bunch of other baselines}
\fi

\iffalse
\todo{at least three extensions put this in conclusion\\
1. extend to graph structure\\
2. handle product assumption, it is very hard, not clear how to handle this at this point, will be complex and unknown stochastic process, I can compute the mean and variance\\
3. automatically discover the hidden structure, I believe this is NP-hard, and this should be put low priority}
\fi
% For some problems, we may wish to automatically discover such additive structure given an arbitrary tree-structured function inside the BO loop. In our future work, we plan to extend the ideas in~\cite{gardner2017,rolland2018} and apply MCMC to automatically decompose a tree-structured function into additive components. 

\subsection*{Acknowledgements}
Xingchen Ma is supported by Onfido.
This research received funding from the Flemish Government under the “Onderzoeksprogramma Artificiële Intelligentie (AI) Vlaanderen” programme.

\bibliography{mybib.bib}

% \newpage
\clearpage
\begin{appendices}
\crefalias{section}{appsec}

\section{Implementation Details}
\label{app:implementation-details}

The most important part in our implementation is a linear representation of the tree structure using breadth-first search (BFS). When BFS encounters a node, the linearization routine adds a tag, which is the rank of this node in its siblings, in front of the variables associated with this node, and this tag will be used to construct the delta kernel in \Cref{algo:kernel-construction}. \Cref{fig:bfs-example} shows the liner representation corresponding to the tree structure in \Cref{fig:example}. 

\begin{figure}[h]
\centering
\resizebox{0.95\columnwidth}{!}{
\begin{tikzpicture}
% \draw[step=1cm,gray,very thin] (0,0) grid (10,1);

\fill[blue!40!white] (0,0) rectangle (1,1);
\draw (0,0) rectangle (1,1) node [pos=.5] {0};
\draw (1,0) rectangle (3,1) node [pos=.5] {$\bm{v}_r$};
% \draw (0,0) rectangle (1,1) node [pos=.5] {$t1$};

\fill[blue!40!white] (3,0) rectangle (4,1);
\draw (3,0) rectangle (4,1) node [pos=.5] {0};
\draw (4,0) rectangle (6,1) node [pos=.5] {$\bm{v}_{p_1}$};

\fill[blue!40!white] (6,0) rectangle (7,1) ;
\draw (6,0) rectangle (7,1) node [pos=.5] {1};
\draw (7,0) rectangle (10,1) node [pos=.5] {$\bm{v}_{p_2}$};

\end{tikzpicture}
}
\caption{Linear representation of the tree structure corresponding to \Cref{fig:example}} 
\label{fig:bfs-example}
\end{figure}
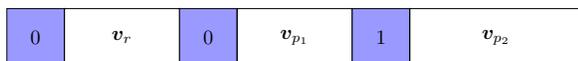

When linearizing an observation, we modify its tag using the following rules:
\begin{itemize}
    \item if a node is not associated with a continuous parameter, its tag is changed to a unique value
    \item if a node doesn't in this observation's corresponding path, its tag is changed to a unique value
    \item otherwise, we set the values after this tag to be the sub-parameter restricted to this node
\end{itemize}{}

For example, the linear representation of an observation $(0.1,0.2,0.3,0.4)$ falling into the left path is shown in \Cref{fig:bfs-example-left-observation} and the linear representation of an observation $(0.5,0.6,0.7,0.8,0.9)$ falling into the right path is shown in \Cref{fig:bfs-example-right-observation}.

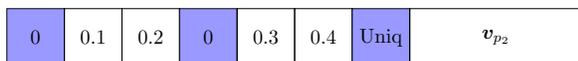
\begin{figure}[h]
\centering
\resizebox{0.95\columnwidth}{!}{
\begin{tikzpicture}

\fill[blue!40!white] (0,0) rectangle (1,1);
\draw (0,0) rectangle (1,1) node [pos=.5] {0};
\draw (1,0) rectangle (2,1) node [pos=.5] {0.1};
\draw (2,0) rectangle (3,1) node [pos=.5] {0.2};

\fill[blue!40!white] (3,0) rectangle (4,1);
\draw (3,0) rectangle (4,1) node [pos=.5] {0};
\draw (4,0) rectangle (5,1) node [pos=.5] {0.3};
\draw (5,0) rectangle (6,1) node [pos=.5] {0.4};

\fill[blue!40!white] (6,0) rectangle (7,1) ;
\draw (6,0) rectangle (7,1) node [pos=.5] {Uniq};
\draw (7,0) rectangle (10,1) node [pos=.5] {$\bm{v}_{p_2}$};

\end{tikzpicture}
}
\caption{Linear representation of an observation falling into the lower path corresponding to \Cref{fig:example}} 
\label{fig:bfs-example-left-observation}
\end{figure}

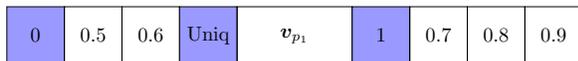
\begin{figure}[h]
\centering
\resizebox{0.95\columnwidth}{!}{
\begin{tikzpicture}
% \draw[step=1cm,gray,very thin] (0,0) grid (10,1);

\fill[blue!40!white] (0,0) rectangle (1,1);
\draw (0,0) rectangle (1,1) node [pos=.5] {0};
\draw (1,0) rectangle (2,1) node [pos=.5] {0.5};
\draw (2,0) rectangle (3,1) node [pos=.5] {0.6};

\fill[blue!40!white] (3,0) rectangle (4,1);
\draw (3,0) rectangle (4,1) node [pos=.5] {Uniq};
\draw (4,0) rectangle (6,1) node [pos=.5] {$\bm{v}_{p_1}$};

\fill[blue!40!white] (6,0) rectangle (7,1) ;
\draw (6,0) rectangle (7,1) node [pos=.5] {1};
\draw (7,0) rectangle (8,1) node [pos=.5] {0.7};
\draw (8,0) rectangle (9,1) node [pos=.5] {0.8};
\draw (9,0) rectangle (10,1) node [pos=.5] {0.9};

\end{tikzpicture}
}

\caption{Linear representation of an observation falling into the upper path corresponding to \Cref{fig:example}} 
\label{fig:bfs-example-right-observation}
\end{figure}

Based on this linear representation, it is now straightforward to compute the covariance function, which is constructed in \Cref{algo:kernel-construction},  between any two observations. We note the dimension of such a linear representation has order $\mathcal{O}(d)$, where $d$ is the dimension of the original parameter space, thus there is little overhead compared with other covariance functions in existing GP libraries, such as RBF or Matern in GPyOpt. 

% For the computation of the covariance function, we emphasize we don’t need to enumerate all vertices in the tree and this can be considered one of our main contributions. In our implementation, every observation has a unique breadth-first search (BFS) representation and we compute the covariance matrix using this representation instead of using vertices. This computation is fully vectorized. Because there are too many implementation details, we omit these descriptions in our paper, but have included full source code. The dimension of this BFS representation has the order O(d), the same with its original dimension. To conclude, there is little overhead compared with other covariance functions in existing GP libraries, e.g. RBF or Matern in GPyOpt. 

% There seems to be a misunderstanding here. In this work, we didn't use an additive GP model at each leaf node, we instead use a global additive covariance function to model all leaves simultaneously. This is our main contribution and major difference with other competing GP-based methods, which usually apply independent GPs to each leaf node (Sec. 1.1.2). We emphasize in our algorithm, there is only one global covariance function (Prop. 1), thus there is only one GP. Under our construction, we can easily read information from substructures of this global GP.

\section{Time Complexity Analysis of Inference with Add-Tree}
\label{app:time-complexity-inference}

% 知道了如何算时间复杂度，我的AISTATS２０２０文章
% 主要的时间复杂度在于算ｅｑ（８）的这个矩阵的逆
% 在ｆｉｇｕｒｅ１的这个例子中，时间复杂度是O(n^3)，因为这棵树的两个分支之间有关联，所以不能够分解
% 考虑jenatton的例子，由于在根节点，两个分支不共享，使得时间复杂度为O((n/2)^3) = O(n^3)，很显然这个常数还是比较大的
% 如果考虑更极端的情况，时间复杂度其实会更加降低，分析方法就是递归的方法
% 时间复杂度＝ｍａｘ（左边的子的时间复杂度，右边的子树的时间复杂度）
% 但是这并不是我这篇文章的重点，因为，在ｂｏ中，拟合ｇｐ的时间远远小于
% 计算目标函数的时间，因尔这个时间复杂度不是太重要，虽然也可以放在文章里面，占长度

We analyse the time complexity of inference using our proposed Add-Tree covariance function by recursion. Without loss of generality, let the tree structure $\mathcal{T}$ be a binary tree. If the root node $r$ of $\mathcal{T}$ is associated with a continuous parameter, which means this parameter is shared by all paths. In this worst case, the gram matrix in \Cref{eq:joint-distributation} is dense and structureless, and the complexity will be $\mathcal{O}(n^3)$, where $n$ is the number of observations. Otherwise, when $r$ is not associated to any continuous parameter, let $n_l$ and $n_r$ be the number of samples falling into the left path and the right path respectively, we have $T(r) = T(rl) + T(rr)$, where $rl$ is the left child of $r$, $rr$ is the right child of $r$, $T(r),T(rl),T(rr)$ are the running time at nodes $r,rl,rr$ respectively. Because the worst-case running time at nodes $rl$ and $rr$ is $\mathcal{O}(n_l^3)$ and $\mathcal{O}(n_r^3)$ respectively, we have $T(r)=\mathcal{O}(n_l^3 + n_r^3)$. 

Table~\ref{tbl:time-complexity} summarizes the worst-case inference time complexity comparison of our Add-Tree covariance function and other related methods. In Table~\ref{tbl:time-complexity}, $n_i$ is number of observations falling into path $l_i$ and $n = \sum_{1 \leq i \leq |P|} n_i^{3}$.
%, $S_i = \{j \mid 1 \leq j \leq n, \bm{x}_{j}\vert_{l_i} \neq \emptyset\}$. 
%Because usually a tree-structured function has shared variables across different paths, $S_i$ will be larger than $n_i$. 
%From Table~\ref{tbl:time-complexity}, while it looks like our method has higher cost than independent GPs, Add-Tree actually needs fewer data points to achieve a similar performance due to the efficient sharing mechanism and this will be empirically demonstrated in Section~\ref{sec:experiment}. 
In general, Add-Tree performs the worst among these three methods from the aspect of time complexity. However, due to the explicit sharing mechanism, our approach requires fewer black-box calls to the expensive objective function, which typically dominates the computational cost of the GP model.

\begin{table*}[h]
\caption{Inference Time Complexity Comparison}
% \vspace{0.2cm}
\label{tbl:time-complexity}
\centering
\begin{tabular}{lll}
\toprule
\Gape[4pt]{\textbf{Method}} & \textbf{Share?} & \textbf{Complexity} \\
\midrule
Independent & no & $\mathcal{O}(\sum_{1 \leq i \leq |P|} n_i^{3})$ \\ 
\citeauthor{jenatton2017} & yes & $\mathcal{O}(\sum_{1 \leq i \leq |P|} n_i^{3} + |V|^3)$ \\
% Add-Tree    & yes & $\mathcal{O}(\sum_{1 \leq i \leq |P|} |S_i|^{3})$\\
Add-Tree    & yes & $\begin{cases} \mathcal{O}(n_l^3 + n_r^3) & \text{ if no sharing continuous parameter at } r  \\
\mathcal{O}(n^3) & \text{ otherwise}
\end{cases}$\\
\bottomrule
\end{tabular}

\end{table*}

\section{Time Complexity Analysis of \Cref{algo:bo}}
\label{app:time-complexity-algorithm2}
W.l.o.g, let the tree structure $\mathcal{T}$ be a perfect binary tree, the depth of this tree be $h$, and suppose all nodes are associated with a $d_u$ dimensional vector. Then $|P| = 2^{h-1}$ and $|V| = 2^h - 1$. The running time of searching in every leaves in a na\"{i}ve way is $|P| + |P| \mathcal{O}(h^2 d_u^2) = 2^{h-1} + 2^{h-1} \mathcal{O}(h^2 d_u^2)$. 
The running time of \Cref{algo:bo} is $|V| \mathcal{O}(d_u^2) + |P| h + |P| =  2^{h-1} + 2^{h-1} (h + 2 \mathcal{O}(d_u^2)) - \mathcal{O}(d_u^2)$, here we keep the constants just for clarity. It is clear that \Cref{algo:bo} has a substantial advantage over a na\"{i}ve method when $h \ge 2$. We note the complexity of BFGS is $\mathcal{O}(n^2)$, where $n$ is the dimension of the parameter.

\section{Combine With Other Acquisition Functions}
\label{app:combine-with-other-acq}

Add-Tree covariance function itself can be combined with any other acquisition functions and enables efficient information sharing. To efficiently optimize the acquisition function using \Cref{algo:bo}, it is required the acquisition function has additive structure, otherwise the two-step approach in ~\citet{jenatton2017} can be used.

\section{When Additive Assumption is Not Enough}
\label{app:additive-assumption-issues}

For objective functions with known additive structure, our proposed Add-Tree covariance function usually performs the best. If there is an interaction effect between the variables along a single path in the tree structure, we can combine the method proposed in \citet{duvenaud2011} by including higher order additive kernels for these variables. 
To illustrate the covariance function design in this case, we again take the tree-structured function in \Cref{fig:example} as an example. Since there is an interaction effect between $\bm{v}_r$ and $\bm{v}_{p1}$, the latent variables associated to $f_{p_1, \mathcal{T}}$ is decomposed as $\bm{f}_{r}^{(1)} + \bm{f}_1 + \bm{f}_{r1}$, where $\bm{f}_{r1}$ is the interaction term between $\bm{v}_r$ and $\bm{v}_{p1}$. Similarity, the latent variables associated to $f_{p_2, \mathcal{T}}$ is $\bm{f}_{r}^{(2)} + \bm{f}_2 + \bm{f}_{r2}$. Similar to \Cref{eq:step1}, we have:

\begin{equation}
\label{eq:app-interaction-step1}
    \begin{bmatrix}
           \bm{f}_{r}^{(1)} + \bm{f}_1 + \bm{f}_{r1}\\
           \bm{f}_{r}^{(2)} + \bm{f}_2 + \bm{f}_{r2}
         \end{bmatrix} = \begin{bmatrix}
           \bm{f}_{r}^{(1)} \\
           \bm{f}_{r}^{(2)}
         \end{bmatrix} + \begin{bmatrix}
           \bm{f}_1 \\
           \bm{f}_2 
         \end{bmatrix} + \begin{bmatrix}
           \bm{f}_{r1} \\
           \bm{f}_{r2} 
         \end{bmatrix}.
\end{equation}

To model $\bm{f}_{r1}$ and $\bm{f}_{r2}$ separately, we can use product covariance functions $k_{r} k_1$ and $k_{r} k_2$ respectively.
Without further assumptions, it is not clear how to model the covariance between $\bm{f}_{r1}$ and $\bm{f}_{r2}$. A visualization is shown in \Cref{eq:app-gram-matrix-interaction}. In this case, a safe choice is to set these covariance to be zero, because over-estimating the covariance will confuse the GP, and the price paid for ignoring these covariance here is we lose some potential sample-efficiency. 

\iffalse
\begin{figure}[h]
\centering
\resizebox{0.4\columnwidth}{!}{
\begin{tikzpicture}

\draw (0,0) rectangle (1,1) node [pos=.5] {?};
\draw (1,1) rectangle (2,2) node [pos=.5] {?};

\fill[blue!40!white] (1,0) rectangle (2,1);
\draw (1,0) rectangle (2,1) node [pos=.5] {$K_{r2}$};
\fill[blue!40!white] (0,1) rectangle (1,2);
\draw (0,1) rectangle (1,2) node [pos=.5] {$K_{r1}$};

% \node[above] at (0.5,2) {$\bm{f}_{r1}$};
% \node[left] at (0,1.5) {$\bm{f}_{r1}$};
% \node[above] at (1.5,2) {$\bm{f}_{r2}$};
% \node[left] at (0,0.5) {$\bm{f}_{r2}$};

\end{tikzpicture}
}
\caption{Modeling The Interaction Effects} 
\label{fig:app-gram-matrix-interaction}
\end{figure}
\fi

\def\InteractioGramMatrix{\tikz[baseline=.1ex]{
% \draw (0,0) rectangle (1,1) node [pos=.5] {?};
% \draw (1,1) rectangle (2,2) node [pos=.5] {?};
% \fill[blue!40!white] (1,0) rectangle (2,1);
% \draw (1,0) rectangle (2,1) node [pos=.5] {$K_{r2}$};
% \fill[blue!40!white] (0,1) rectangle (1,2);
% \draw (0,1) rectangle (1,2) node [pos=.5] {$K_{r1}$};

\draw (-1,-1) rectangle (0,0) node [pos=.5] {?};
\draw (0,0) rectangle (1,1) node [pos=.5] {?};
\fill[blue!40!white] (0,-1)  rectangle (1,0);
\draw (0,-1) rectangle (1,0) node [pos=.5] {$K_{r2}$};
\fill[blue!40!white] (-1,0)  rectangle (0,1);
\draw (-1,0) rectangle (0,1) node [pos=.5] {$K_{r1}$};
}
}

\begin{equation}
\label{eq:app-gram-matrix-interaction}
    \begin{bmatrix}
           \bm{f}_{r1} \\
           \bm{f}_{r2} 
         \end{bmatrix} \sim \mathcal{N}\left(\bm{0}, \InteractioGramMatrix \right).
\end{equation}{}

Combine \Cref{eq:joint-distributation,eq:app-interaction-step1,eq:app-gram-matrix-interaction}, we obtain the joint distribution of a tree-structured function with interaction effects:
\iffalse
\def\JointInteractionGramMatrix{\tikz[baseline=.1ex]{
\draw (-1,-1) rectangle (0,0) node [pos=.5] {$K_r^{(21)}$};
\draw (0,0) rectangle (1,1) node [pos=.5] {$K_r^{(12)}$};
\fill[blue!40!white] (0,-1)  rectangle (1,0);
\draw (0,-1) rectangle (1,0) node [pos=.5] {$K_r^{(22)} + K_2 + K_{r2}$};
\fill[blue!40!white] (-1,0)  rectangle (0,1);
\draw (-1,0) rectangle (0,1) node [pos=.5] {$K_r^{(11)} + K_1 + K_{r1}$};
}
}
\begin{equation}
        \begin{bmatrix}
           \bm{f}_r^{(1)} {+} \bm{f}_1 \\
           \bm{f}_r^{(2)} {+} \bm{f}_2 \\
         \end{bmatrix} \sim \mathcal{N}\left(\bm{0}, \JointInteractionGramMatrix \right),
\label{eq:app-joint-distributation-interaction}         
\end{equation}

\fi
\begin{equation}
        \begin{bmatrix}
           \bm{f}_{r}^{(1)} + \bm{f}_1 + \bm{f}_{r1}\\
           \bm{f}_{r}^{(2)} + \bm{f}_2 + \bm{f}_{r2}
         \end{bmatrix} \sim \mathcal{N}\left(\bm{0}, 
         \left[
         \begingroup % keep the change local
         \begin{array}{ll}
           K_{11} & K_r^{(12)}  \\
           K_r^{(21)} & K_{22}
         \end{array}
         \endgroup
         \right]
         \right),
\label{eq:app-joint-distributation}         
\end{equation}
where $K_{11} =K_r^{(11)} + K_1 + K_{r1}$ and $K_{22}=K_r^{(22)} + K_2 + K_{r2}$. To implement the Add-Tree covariance function with interaction effects, the linear representation presented in \Cref{app:implementation-details} remains unchanged. For \Cref{algo:kernel-construction}, we only need to construct an extra term by multiplying the corresponding delta covariance function with the interaction terms we are interested in, and append this extra term in the final covariance function.

\end{appendices}

\end{document}